\documentclass[lettersize,journal]{IEEEtran}
\usepackage{graphicx} 
\usepackage{amsmath}
\usepackage{amssymb}
\usepackage{cite}
\usepackage{optidef}
\usepackage{xcolor}
\usepackage{balance}
\usepackage{booktabs}
\usepackage{algorithm}
\usepackage{algpseudocode}
\usepackage{hyperref}
\usepackage{url}
\usepackage{amsthm}

\newtheorem{assumption}{Assumption}
\newtheorem{theorem}{Theorem}
\newtheorem{definition}{Definition}

\newtheorem{remark}{Remark}
\newtheorem{example}{Example}

\begin{document}
\title{Formation-Aware Adaptive Conformalized Perception for Safe Leader–Follower Multi-Robot Systems}

\author{Richie R. Suganda$^{1}$ and Bin Hu$^{1,2}$
\thanks{$^{1}$ Richie R. Suganda and Bin Hu are with the Department of Electrical and Computer Engineering, University of Houston, Houston, TX 77004, USA. {\tt rrsugand@cougarnet.uh.edu; bhu12@uh.edu}.}
\thanks{$^{2}$ Bin Hu is with the Department of Engineering Technology, University of Houston, Houston, TX 77004, USA. {\tt bhu12@uh.edu}.}}


\maketitle

\begin{abstract}
This paper considers the perception safety problem in distributed vision-based leader–follower formations, where each robot uses onboard perception to estimate relative states, track desired setpoints, and keep the leader within its camera field of view (FOV). Safety is challenging due to heteroscedastic perception errors and the coupling between formation maneuvers and visibility constraints. We propose a distributed, formation-aware adaptive conformal prediction method based on Risk-Aware Mondrian CP to produce formation-conditioned uncertainty quantiles. The resulting bounds tighten in high-risk configurations (near FOV limits) and relax in safer regions. We integrate these bounds into a Formation-Aware Conformal CBF-QP with a smooth margin to enforce visibility while maintaining feasibility and tracking performance. Gazebo simulations show improved formation success rates and tracking accuracy over non-adaptive (global) CP baselines that ignore formation-dependent visibility risk, while preserving finite-sample probabilistic safety guarantees. The experimental videos are available on the \href{https://nail-uh.github.io/iros2026.github.io/}{project website}\footnote{Project Website: \url{https://nail-uh.github.io/iros2026.github.io/}}.
\end{abstract}

\section{Introduction}


Distributed vision-based leader-follower systems are critical for executing complex, collaborative tasks such as cooperative exploration \cite{chiun2025marvel} and search and rescue \cite{queralta2020collaborative}. In this architecture, multi-agent coordination is achieved by designating a leader agent, while followers rely entirely on onboard sensing to maintain desired relative formations \cite{mariottini2009vision, 11127883}. A fundamental operational requirement in such vision-based schemes is ensuring that the leader consistently remains within the follower's camera field-of-view (FOV), a constraint that becomes increasingly complex to enforce during dynamic, tightly coupled maneuvers.



Ensuring operational safety within this framework is highly challenging due to the tight coupling between vehicle dynamics and camera FOV constraints, compounded by the presence of stochastic and heteroscedastic perception uncertainties. Visibility maintenance in leader-follower formations has been studied extensively \cite{panagou2014cooperative}. Recent works \cite{11127883} formulate these FOV constraints using Control Barrier Functions (CBFs) and utilize Deep Neural Networks (DNNs) for state estimation. However, nominal CBF formulations fundamentally assume exact state feedback and fail to account for the inherent uncertainties of the perception system. Simply enforcing safety constraints using an estimated state is insufficient; a moderate estimation error near the safe-set boundary can lead to a false certification of safety, where the estimated state appears safe while the true physical state violates the visibility constraints.

Existing approaches to address perception uncertainty in CBFs typically rely on robust or stochastic formulations. Robust CBFs \cite{dean2021guaranteeing,cosner2021measurement,xu2015robustness} utilize worst-case error bounds, yielding highly conservative behavior, whereas stochastic CBFs \cite{clark2019control,santoyo2021barrier} require prior knowledge of accurate error distributions, which is rarely available for complex vision models.

\begin{figure}
    \centering
    \includegraphics[width=\linewidth]{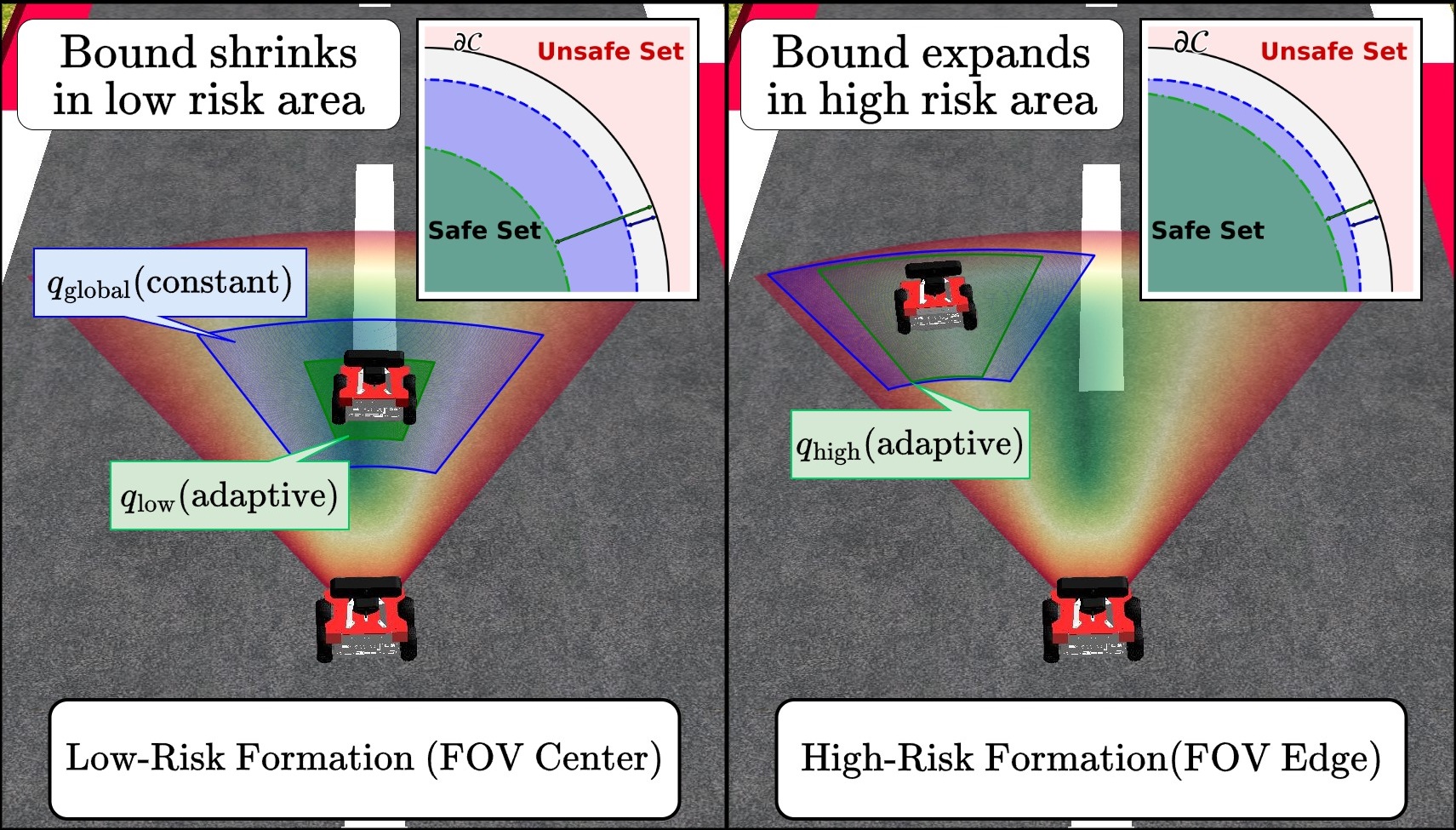}
    \caption{\textbf{Formation-Aware Adaptive Conformalized Leader–Follower System.} Under heteroscedastic, formation-dependent perception errors, the proposed risk-aware conformal bounds tighten in low-risk configurations to preserve tracking performance and expand near FOV limits to enforce probabilistic safety.}
    \label{fig:cover}
    \vspace{-5ex}
\end{figure}

Conformal Prediction (CP) has recently been integrated into safety-critical robotics as a distribution-free mechanism for translating data-driven estimation uncertainty into statistically valid safety constraints. In perception-based control, Yang \emph{et al.} \cite{yang2023safe} incorporate conformal prediction sets derived from stochastic sensor measurements into a CBF framework to obtain probabilistic safety guarantees without assuming explicit noise distributions. Zhang \emph{et al.} \cite{zhang2025conformal} embed CP-derived bounds from learned state estimators into a risk-aware CBF formulation, establishing finite-sample safety guarantees for stochastic systems. Tayal \emph{et al.} \cite{tayal2025cp} further develop CP-NCBF to statistically certify neural control barrier functions. Beyond CBF-based control, CP has been applied to safe motion planning \cite{lindemann2023safe}, safety-aware POMDP planning \cite{sheng2024safe}, adversarially robust interaction \cite{mirzaeedodangeh2025safe}, and uncertainty-aware model predictive control \cite{zhou2024safety}. These works collectively demonstrate that CP enables rigorous safety certification for learning-enabled robotic systems under uncertainties.


Despite these advances, standard CP assumes exchangeability over the entire state space, resulting in a single global error bound. While methods such as Conformalized Quantile Regression (CQR) \cite{romano2019conformalized} can address varying uncertainty, they typically require retraining the base architecture as a specific quantile regressor and do not explicitly account for spatial safety constraints. In vision-based leader-follower systems, perception errors are inherently spatial and highly heteroscedastic: uncertainty increases significantly near FOV boundaries and decreases near the image center. A global conformal bound therefore enforces worst-case conservatism everywhere, unnecessarily shrinking the feasible control set even in low-risk configurations.


To bridge the gap between rigorous safety guarantees and operational mobility, we propose a distributed, formation-aware adaptive framework that operates exclusively on local relative states. We leverage Mondrian Conformal Prediction \cite{bostrom2020mondrian}, which partitions calibration data to produce category-conditional prediction intervals. By formulating a Risk-Aware Mondrian CP, our approach computes state- and formation-conditioned uncertainty quantiles. This systematically reduces conservatism by assigning stricter statistical bounds in high-risk configurations (near FOV limits) and relaxing them in low-risk regions. We seamlessly integrate these dynamic bounds into a Formation-Aware Conformal CBF-QP utilizing a continuous margin transition, explicitly enforcing FOV limits while preserving control feasibility.


The specific contributions of this paper are threefold:
\begin{enumerate}
    \item \textbf{Adaptive Uncertainty Quantification}: We introduce a Risk-Aware Mondrian CP framework explicitly conditioned on the robot's formation state, overcoming the overly conservative, formation-agnostic bounds of standard global CP.
    \item \textbf{Conformal CBF-QP Integration}: We develop a Formation-Aware Conformal CBF-QP that integrates these heteroscedastic CP bounds using a smooth margin, ensuring strict FOV constraint satisfaction while maintaining optimization feasibility.
    \item \textbf{Empirical Validation}: We validate the proposed architecture through extensive Gazebo simulations, demonstrating superior formation success rates and tracking performance compared to global CP baselines while preserving rigorous probabilistic safety guarantees.
\end{enumerate}

\section{Preliminaries}

\noindent\textbf{Notation.} Let $\mathbb{R}$ and $\mathbb{R}_+$ denote the sets of real and nonnegative real numbers. For $x\in\mathbb{R}^n$, $\|x\|$ is the Euclidean norm. The probability of an event $\mathcal{E}$ is denoted by $\mathbb{P}(\mathcal{E})$. For a continuously differentiable $h:\mathbb{R}^n\!\to\!\mathbb{R}$ and locally Lipschitz vector field $f$, the Lie derivative is $\mathcal{L}_f h(x)\triangleq \nabla h(x)^\top f(x)$. A continuous function $\alpha:(-b,a)\to\mathbb{R}$ with $a,b\in\mathbb{R}_+$ is an \emph{extended class-$\mathcal{K}$ function} ($\alpha\in\mathcal{K}_e$) if it is strictly increasing and $\alpha(0)=0$.

\subsection{Control Barrier Functions}
Consider the control-affine system
\begin{equation}\label{eq:dynamic}
\dot{x}=f(x)+g(x)u,
\end{equation}
where $x\in\mathcal{X}\subseteq\mathbb{R}^n$, $u\in\mathcal{U}\subseteq\mathbb{R}^m$, and $f$ and $g$ are locally Lipschitz. Safety is encoded by the \emph{safe set}
\begin{equation}\label{eq:safe-set}
\mathcal{C}\triangleq \{x\in\mathbb{R}^n: h(x)\ge 0\},
\end{equation}
where $h$ is continuously differentiable.

\begin{definition}[Control Barrier Function (CBF) {\cite{ames2019control}}]\label{def:cbf}
The function $h$ is a CBF for \eqref{eq:dynamic} on $\mathcal{C}$ if there exists $\alpha\in\mathcal{K}^e$ such that for all $x\in\mathcal{C}$,
\begin{equation*}\label{eq:cbf-condition}
\sup_{u\in\mathcal{U}}\big(\mathcal{L}_f h(x)+\mathcal{L}_g h(x)u+\alpha(h(x))\big)\ge 0.
\end{equation*}
The safety set $\mathcal{C}$ is forward invariant with respect to the system defined in \eqref{eq:dynamic}.
\end{definition}

\subsection{Split Conformal Prediction}
Conformal prediction provides distribution-free finite-sample uncertainty sets under exchangeability \cite{shafer2008tutorial}. Given a dataset $\mathcal{D}=\{(y^i,x^i)\}_{i=1}^n$, where $y^{i} \in \mathcal{Y}$ denotes the observed input~(e.g., feature or measurement) and $x^{i} \in \mathcal{X}$ is the corresponding unknown target (e.g., label or state). Split conformal prediction framework randomly partitioned the dataset $\mathcal{D}$ into a training set $\mathcal{D}_{\mathrm{train}}$ and a calibration set $\mathcal{D}_{\mathrm{cal}}$ of size $n_{\mathrm{cal}}$. Let $\pi_{\theta}:\mathcal{Y}\to\mathcal{X}$ denote a predictor with parameters $\theta$ trained on $\mathcal{D}_{\mathrm{train}}$. For each data point $(y^{i}, x^{i}) \in\mathcal{D}_{\mathrm{cal}}$, let $R_i \triangleq S\!\left(x^i,\pi_{\theta}(y^i)\right) \in \mathbb{R}$ denote a nonconformity score
where $S$ is a chosen function that measures the prediction error~(e.g. $S(x,\hat{x})=\|x-\hat{x}\|$). Given a target miscoverage rate $\delta\in(0,1)$, the empirical quantile
$
\hat{q} \triangleq \mathrm{Quantile}_{1-\delta}\big(\{R_i\}_{i\in\mathcal{D}_{\mathrm{cal}}}\big)
$ of the calibration scores is computed as the $\lceil (n_{\mathrm{cal}}+1)(1-\delta)\rceil$-th smallest score in $\{R_i\}_{i\in\mathcal{D}_{\mathrm{cal}}}$. For a new observation $y^{n+1}$, the split-conformal prediction set is constructed as 
\begin{equation*}\label{eq:cp-set}
\Delta(y^{n+1}) \triangleq \{x\in\mathcal{X}: S(x,\pi_{\theta}(y^{n+1}))\le \hat{q}\}.
\end{equation*}
If the new test point $(y^{n+1}, x^{n+1})$ is exchangeable with the calibration data, this set satisfies the marginal coverage guarantee
$
\mathbb{P}\big(x^{n+1}\in \Delta(y^{n+1})\big)\ge 1-\delta.
$

\section{System Framework \& Problem Formulation}
Consider a leader--follower multi-agent system indexed by $i\in\{1,\dots,N\}$, where each follower $i$ interacts with a designated predecessor (leader) $i-1$ through the leader's transmitted control action. Let $x_i\in \mathcal{X} \subset \mathbb{R}^n$ denote the state of follower $i$, $u_i\in\mathbb{R}^m$ its control input, and let $u_{i-1}\in\mathbb{R}^m$ denote the predecessor's control input available to follower $i$ via wireless communication. The leader--follower pair subsystem $(i\!-\!1,i)$ is modeled in control-affine form as
\begin{align}
\label{eq:generic_dynamics}
\dot{x}_i = f\!\left(x_i,u_{i-1}\right) + g(x_i)u_i ,
\end{align}
where $f:\mathbb{R}^n\times\mathbb{R}^m\to\mathbb{R}^n$ captures the intrinsic follower dynamics and the coupling induced by the leader's action, and $g:\mathbb{R}^n\to\mathbb{R}^{n\times m}$ maps the follower's control input to the state dynamics. For notational simplicity, we omit any explicit dependence of $f$ and $g$ on the agent index $i$. We assume $f(\cdot,\cdot)$ is locally Lipschitz in $(x_i,u_{i-1})$ and $g(\cdot)$ is locally Lipschitz in $x_i$. This paper assumes that the follower state $x_i$ is not measured directly and is estimated online using an on-board perception pipeline, and the control input $u_i$ is synthesized based on these state estimates together with the communicated leader input $u_{i-1}$. Our objective is to develop \emph{formation-aware} uncertainty quantification for the on-board perception system and to integrate these uncertainty estimates into a control design that achieves the desired formation while enforcing \emph{perception-aware} safety constraints.

\begin{example}[Vision-based leader--follower kinematics]\label{example:system}
A concrete instance of~\eqref{eq:generic_dynamics} is the vision-based leader--follower model studied in~\cite{11127883,hu2015distributed,hu2021stochastic}. For follower $i$, define the relative state
$x_i=[L_i,\alpha_i,\phi_i]^\top\in\mathbb{R}^3$, where $L_i$ is the inter-agent distance, $\alpha_i$ is the leader bearing angle, and $\phi_i$ is the follower bearing angle. The control input is $u_i=[v_i,\omega_i]^\top\in\mathbb{R}^2$, and the communicated exogenous input is the leader control $u_{i-1}=[v_{i-1},\omega_{i-1}]^\top\in\mathbb{R}^2$. The relative kinematics take the control-affine form \(\dot{x}_i=f(x_i,u_{i-1})+g(x_i)u_i\), with
\begin{align}
\label{eq:dynamic_system}
\begin{bmatrix}\dot{L}_i\\ \dot{\alpha}_i\\ \dot{\phi}_i\end{bmatrix}
&=
\underbrace{\begin{bmatrix}
-\cos\phi_{i} & -d \sin\phi_{i}\\
-\frac{\sin\phi_{i}}{L_i} & \frac{d\cos\phi_{i}}{L_i}\\
\frac{\sin\phi_i}{L_i} & -\frac{d\cos\phi_i}{L_i}
\end{bmatrix}
\begin{bmatrix}v_i\\ \omega_i\end{bmatrix}}_{g(x_i)u_i}
+
\underbrace{\begin{bmatrix}
\cos\alpha_i & 0\\
-\frac{\sin\alpha_i}{L_i} & 1\\
\frac{\sin\alpha_i}{L_i} & 0
\end{bmatrix}
\begin{bmatrix}v_{i-1}\\ \omega_{i-1}\end{bmatrix}}_{f(x_i,u_{i-1})},
\end{align}
where $d$ is the camera offset.
\end{example}

\noindent\textbf{Perception-based Formation Tracking.} Provided desired formation setpoints, $x_{i,d}$, let $k(x_i,u_{i-1},x_{i,d})$ denote a nominal formation control law under which each leader--follower dynamics described in \eqref{eq:generic_dynamics} achieves the desired formation setpoints $x_{i,d}$ exponentially. Instead of using true state $x$ to construct the control input, the follower relies on the on-board perception system to estimate the state $x$. Let $y_i \in \mathcal{Y}$ denote the observations from the onboard sensors~(e.g., camera and IMU), and the estimate $\hat{x}_{i}$ for leader-follower system $i$ is produced by its on-board perception module, $\pi_{\theta}: \mathcal{Y} \rightarrow \mathbb{R}^{n}$, i.e., $\hat{x}_{i} = \pi_{\theta} (y_i)$. Thus, the control signal under the formation control law $k$ and perception system $\pi_{\theta}$ is  $u_{i} = k(\pi_{\theta}(y_i),u_{i-1},x_{i,d})$. Consider the tracking state $z_{i} = [L_i; \alpha_i]$ of the leader-follower dynamics defined in \eqref{eq:dynamic_system}, the formation control law $k(\cdot)$ can be designed as~\cite{hu2015distributed,11127883}
\begin{align}
\label{eq:controller}
u_i =  g^{-1}(z_{i})\left(\begin{bmatrix}
K_{L_i} & 0 \\
0 & K_{\alpha_i}
\end{bmatrix} \begin{bmatrix}
L_{i,d}-L_i \\
\alpha_{i,d}-\alpha_i
\end{bmatrix} - f(z_{i},u_{i-1})\right)
\end{align}
with controller gains $K_{L_i}, K_{\alpha_i} > 0$.

\noindent\textbf{Perception-based Safety.}
In Perception-Based  Formation Tracking, the closed loop $u_i=k(\pi_\theta(y_i),u_{i-1},x_{i,d})$ is only meaningful when the leader remains in a \emph{perception-operational domain} where the onboard module $\pi_\theta$ can reliably produce $\hat{x}_i=\pi_\theta(y_i)$. We encode this requirement via a follower-specific perception-safe set $\mathcal{C}_i\subset\mathbb{R}^n$ and require the \emph{true} state to satisfy $x_i(t)\in\mathcal{C}_i$ for all $t\ge 0$ with a pre-defined probability. As in CBF formulations, let
\begin{align*}
\mathcal{C}_i \triangleq \{x_i\in\mathbb{R}^n \mid h_{\ell}(x_i)\ge 0,\ \forall \ell\},
\end{align*}
where each $h_{\ell}:\mathbb{R}^n\to\mathbb{R}$ is continuously differentiable and locally Lipschitz.
\begin{example}[camera FOV constraints]
For the vision-based model~\eqref{eq:dynamic_system}, observability induces range and FOV limits. With distance bounds $D_{\min},D_{\max}>0$ and horizontal FOV $2\psi_{\max}$, define
\begin{align}
    \label{eq:fov-safe-set}
    \mathcal{C}_i \triangleq \left\{ (L_i, \phi_i) \;\middle|\; D_{\min} \le L_i \le D_{\max},\; |\phi_i| \le \psi_{\max} \right\}
\end{align}
equivalently captured by 
\begin{align*}
    \mathcal{C}_i \triangleq
    \begin{cases}
        \begin{aligned}
            h_1(L_i) &= L_i - D_\text{min} \geq 0 \\
            h_2(L_i) &= -L_i + D_\text{max} \geq 0 \\
            h_3(\phi_i) &= \phi_i + \psi_\text{max} \geq 0 \\
            h_4(\phi_i) &= -\phi_i + \psi_\text{max} \geq 0 \\
        \end{aligned}
    \end{cases}
\end{align*}
\end{example}

Standard CBF filters enforce forward invariance of $\mathcal{C}_i$ under full-state feedback. However, safety must hold for the true state $x_i$ while both control and safety constraints are computed from the perception output $\hat{x}_i=\pi_\theta(y_i)$. Because the estimation error $x_i-\hat{x}_i$ can be large and strongly formation-dependent, treating $\hat{x}_i$ as ground truth (as commonly assumed, e.g.,~\cite{11127883}) can lead to violated perception constraints and loss of observability. This paper addresses this gap by introducing a \emph{formation-aware adaptive conformalized perception} architecture (Fig.~\ref{fig:framework}) that calibrates and adapts uncertainty quantification to the heterogeneous risk associated with different formations, and then propagates these uncertainty sets into the safety layer to enforce perception safety. The next section details each component of the proposed architecture.

\begin{figure*}[t]
	\begin{center}
		\medskip
		\includegraphics[width=\linewidth]{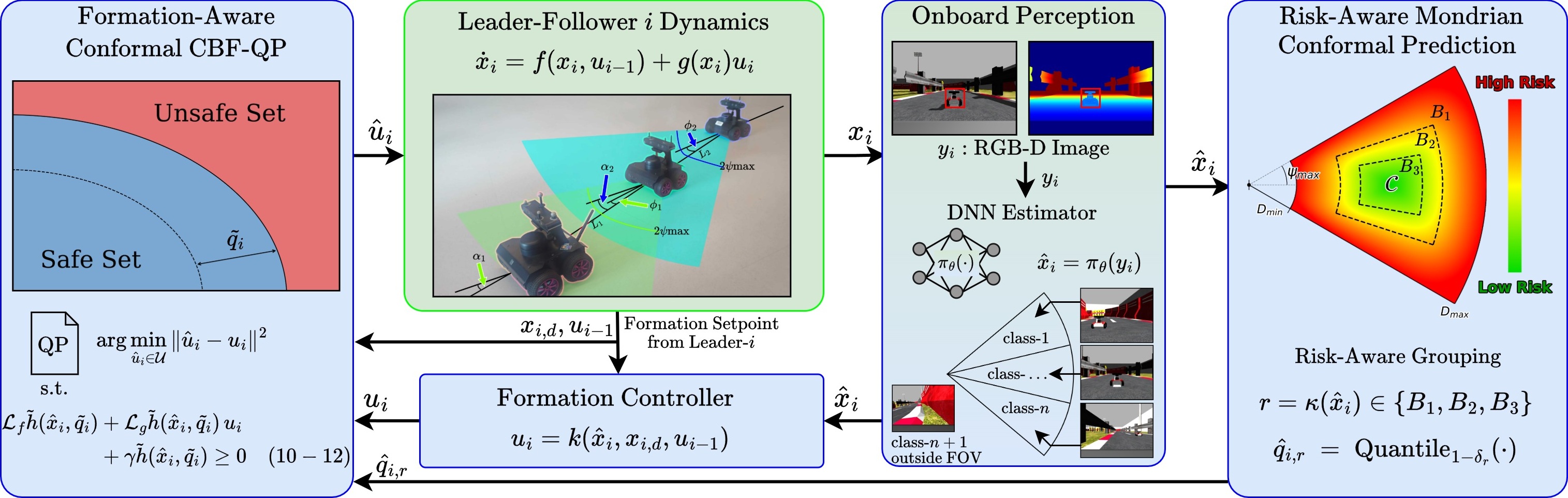}
		\caption{{\textbf{Risk-Aware Adaptive Conformal Control for Safe Leader–Follower Formation Architecture}. Onboard sensing feeds a risk-aware Mondrian conformal predictor that produces state-dependent uncertainty bounds, which parameterize the formation-aware conformal CBF-QP safety filter to enable safe tracking under perception uncertainty.}
			\label{fig:framework}}
	\end{center}
	\vspace{-4ex}
\end{figure*}
\section{Methodology}
The proposed architecture (Fig. \ref{fig:framework}) extends the leader-follower framework in \cite{11127883} by integrating perception uncertainty into the safety layer. This integration translates heteroscedastic perception errors into probabilistic safety constraints, dynamically balancing tracking performance and visibility maintenance.

\subsection{Onboard Perception}
In distributed leader-follower systems, agents lack direct acces to the true relative state $x_i\in\mathcal{X}$. Instead, each agent relies on their onboard sensors to infer the state from high-dimensional sensor observations $y_i\in\mathcal{Y}$. Let $\pi_\theta:\mathcal{Y}\to\mathcal{X}$ denote a generalized estimator parameterized by $\theta$. The onboard state estimation process is formally defined by:
\begin{align}
    \label{eq:state_estimator}
    \hat{x}_i = \pi_\theta(y_i)
\end{align}
The estimator's performance is characterized by the unmeasured estimation error $x_i-\hat{x}_i$ due to the imperfect model.

\begin{example}[Vision-based leader-follower estimation]
    consider each follower in Example~\ref{example:system} equipped with an onboard RGB-D camera, where the observation $y_i$ comprises an RGB image and its aligned depth image. The perception pipeline maps $y_i$ to the relative-state estimate $\hat{x}_i = [\hat{L}_i,\hat{\alpha}_i,\hat{\phi}_i]^\top$.The relative distance $\hat{L}_i$ is obtained by tracking a bounding box around the leader in the RGB image, mapping this region to the aligned depth image, and estimating distance as the mean depth of the pixels within the bounding box. The bearing angle $\phi_i$ is estimated with a deep neural network (DNN) $\pi_\theta$ by discretizing the camera FOV $2\psi_{\max}$ into $n$ uniform angular bins. Each bin corresponds to one class label, and the DNN predicts the most likely bin given the input image. The bearing estimate $\hat{\phi}_i$ is then taken as the center angle of the predicted bin. In this way, the continuous angle estimation problem is treated as a regression task implemented via a classification-based DNN.
    The leader bearing angle is then computed as $\hat{\alpha}_i=\theta_i-\theta_{i-1}-\hat{\phi}_i$, where $\theta_i$ denotes the follower's heading estimated via the IMU, $\theta_{i-1}$ denotes the leader heading received through wireless communication, and $\hat{\phi}_i$ is the camera-frame bearing estimated by the DNN. Additional details of the perception pipeline are provided in\cite{11127883}.
\end{example}

The complete sensing pipeline, combining depth-based range estimation and the DNN estimator $\pi_\theta$, inevitably incurs data-driven, heteroscedastic errors that vary with viewpoint, range, and image geometry. Since the resulting error distribution is unknown, enforcing safety constraints directly on the estimated state, i.e., checking $h_\ell(\hat{x}) \ge 0$, can be unsafe: even small unmodeled errors near the constraint boundary may trigger violations. This motivates explicitly quantifying perception uncertainty and incorporating it into the safety filter. To this end, we adopt \emph{Mondrian conformal regressors} (MCR) \cite{bostrom2020mondrian}, which provide distribution-free, finite-sample guarantees and naturally yield state-dependent bounds suited to heteroscedastic perception errors.

\subsection{Risk-Aware Mondrian Conformal Prediction}
\label{sec:conformal-prediction}

\begin{figure}[t]
\centering
\includegraphics[width=\linewidth]{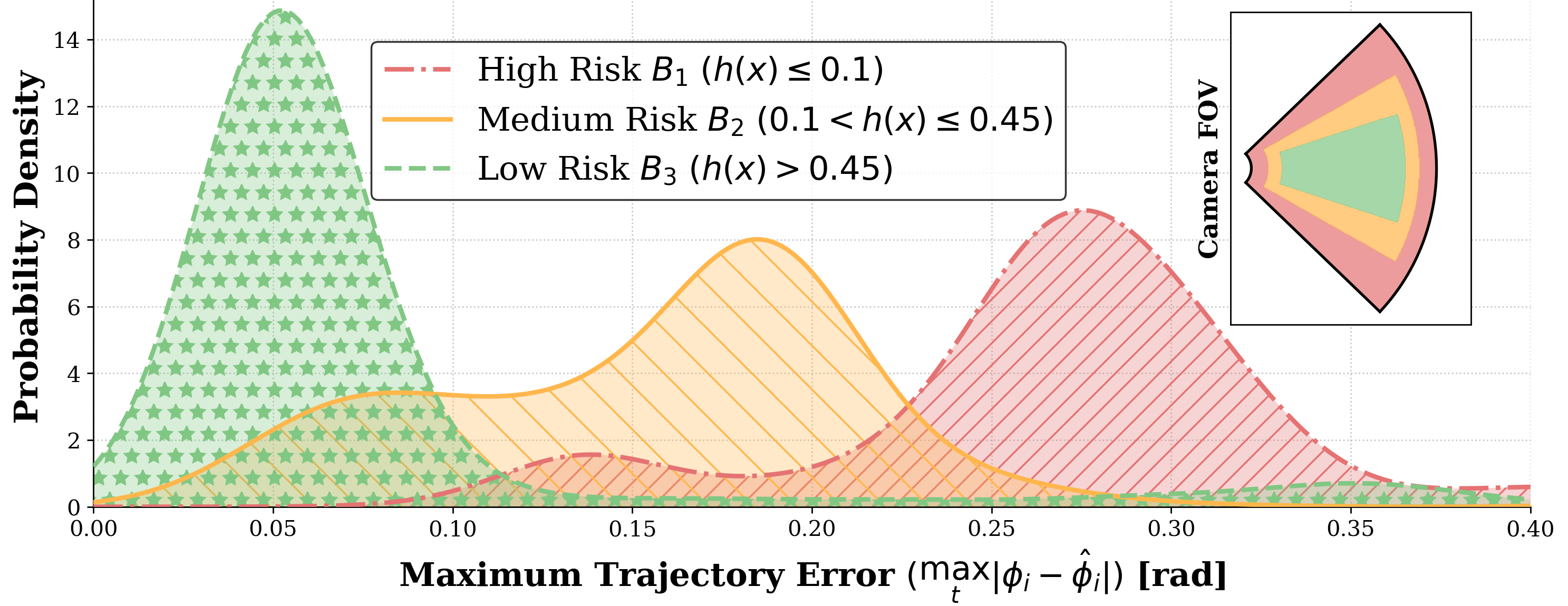}
\caption{Empirical nonconformity-score distributions across formation-dependent risk regions. The scores are strongly heteroscedastic: the high-risk near-boundary set $B_1$ exhibits substantially larger variance and heavier tails than the low-risk interior set $B_3$. This pronounced risk-dependent error structure motivates a formation-aware (risk-aware) conformal calibration that assigns region-specific quantiles, rather than a single global bound.}
\label{fig:heteroscedastic}
\vspace{-4ex}
\end{figure}

To rigorously quantify the unknown perception uncertainty, we build on the MCR framework. As shown empirically in Fig.~\ref{fig:heteroscedastic}, the perception error is strongly heteroscedastic and increases as the formation approaches the FOV boundary. In contrast, standard conformal prediction uses a single global quantile, which (i) enforces a worst-case margin in low-risk configurations and unnecessarily restricts mobility, yet (ii) can still be insufficiently conservative in high-risk, near-boundary regimes where the error distribution has heavier tails. MCR addresses this mismatch by calibrating uncertainty sets \emph{conditionally} through a partition of the state space (or risk regions), producing region-dependent bounds that better reflect formation-dependent variance.

\begin{assumption}[Trajectory Exchangeability]
\label{ass:exchangeable}
given a calibration set
$\mathcal{D}_{i,\mathrm{cal}}=\{(x_i^j(\cdot),\hat{x}_i^j(\cdot))\}_{j=1}^{n_{i,\mathrm{cal}}}$,
where for each $j$, $(x_i^j(\cdot),\hat{x}_i^j(\cdot))$ is a trajectory pair on a finite horizon
$[0,T]$, with $x_i^j:[0,T]\to\mathbb{R}^n$ the ground-truth state of follower $i$ and
$\hat{x}_i^j:[0,T]\to\mathbb{R}^n$ the corresponding perception estimate produced by $\pi_\theta$.
The calibration trajectory pairs and the test pair $(x_i(\cdot),\hat{x}_i(\cdot))$ are exchangeable in the sense that all are drawn i.i.d.\ from an unknown but fixed
distribution induced by the same controller, sensing/perception stack, environment class,
and initial-state distribution.
\end{assumption}

With Assumption~\ref{ass:exchangeable}, we exploit the empirical observation in Fig.~\ref{fig:heteroscedastic} that the perception error is strongly correlated with the formation state---in particular, it grows as the formation approaches the perception-safe boundary. To capture this heteroscedastic, state-dependent behavior, we introduce a \emph{Mondrian taxonomy} $\kappa:\mathcal{X}\to\{1,\dots,R\}$ that partitions the state space into $R$ risk groups $B_1,\dots,B_R$ using a scalar \emph{risk indicator} that measures proximity to the safe-set boundary. Concretely, let $ h(x)\triangleq \min_{\ell} h_\ell(x),$
so that smaller $h(x)$ indicates closer proximity to constraint activation. Given thresholds $\tau_1<\cdots<\tau_{R-1}$, we define
$
B_1=\{x:h(x)<\tau_1\},$
$B_r=\{x:\tau_{r-1}\le h(x)<\tau_r\}$ for $2\le r\le R-1)$, and
$B_R=\{x:h(x)\ge \tau_{R-1}\},$
and assign the online estimate $\hat{x}_i(t)$ to a risk level via $\kappa(\hat{x}_i(t))=r \iff \hat{x}_i(t)\in B_r$.

To calibrate uncertainty \emph{within} each risk group, we extract from the calibration set those trajectories that visit group $B_r$:
\begin{align*}
\mathcal{D}_{i,r} \coloneqq \Big\{ j\in\{1,\dots,n_{i,\mathrm{cal}}\}:\exists\,t\in[0,T]\ \text{s.t.}\ \kappa\big(\hat{x}_i^j(t)\big)=r \Big\}.
\end{align*}
For each $j\in\mathcal{D}_{i,r}$, define the time indices spent in $B_r$ as
$\mathcal{T}_{i,r}^j\coloneqq\{t\in[0,T]:\kappa(\hat{x}_i^j(t))=r\}$ (nonempty by construction), and compute a worst-case nonconformity score over that interval:
\begin{align}
\label{eq:non-conformity-score}
E^{j}_{i,r} = \max_{t\in\mathcal{T}_{i,r}^j}\bigl\|x_i^j(t)-\hat{x}_i^j(t)\bigr\|,\qquad \forall j\in\mathcal{D}_{i,r}.
\end{align}

Standard Mondrian CP typically uses a \emph{single} miscoverage level $\delta$ for all groups. Here, however, the \emph{safety consequence} of perception error is itself risk-dependent: near the boundary, small errors can trigger constraint violations, while interior configurations can tolerate larger errors with minimal safety impact. We therefore propose a \emph{risk-aware} calibration that assigns group-specific miscoverage levels $\delta_r$ (strict in high-risk groups, relaxed in low-risk groups) and computes the corresponding group quantiles
\begin{align}
 \label{eq:quantile}
\hat{q}_{i,r}=\mathrm{Quantile}_{1-\delta_r}\Big(\{E^{j}_{i,r}\}_{j\in\mathcal{D}_{i,r}}\Big).
\end{align}
At runtime, with active group $r=\kappa(\hat{x}_i(t))$, we form the risk-aware conformal uncertainty set
\begin{align*}
\Delta_{i}\big(\hat{x}_i(t)\big)
=\Big\{x\in\mathcal{X}:\ \|x-\hat{x}_i(t)\|\le \hat{q}_{i,r}\Big\}.
\end{align*}
Under Assumption~\ref{ass:exchangeable} and Mondrian conformalization, this construction guarantees group-conditional coverage, $\forall r\in\{1,\dots,R\}$
\begin{align*}
\mathbb{P}\!\left(x_i(t)\in \Delta_i\big(\hat{x}_i(t)\big)\ \middle|\ \kappa(\hat{x}_i(t))=r\right)\ge 1-\delta_r.
\end{align*}
\subsection{Conformal Formation-Aware CBFs}
\label{sec:safety-filter}
The nominal CBF condition certifies forward invariance of the safe set $\mathcal{C}_i$ under exact state feedback $x_i$. Under perception uncertainty, however, enforcing $h_\ell(\hat{x}_i)\ge 0$ is not sufficient: near the boundary, small estimation errors can result in $h_\ell(x_i)<0$ even when $h_\ell(\hat{x}_i)\ge 0$. To obtain a valid safety certificate, we incorporate the risk-aware conformal bounds $\{\hat{q}_{i,r}\}$ from the previous subsection into a tightened barrier condition.

\noindent\textbf{Smooth coverage-preserving conformal margin.}
At time $t$, agent $i$ determines its active risk group $r=\kappa(\hat{x}_i(t))$ and the corresponding conformal radius $\hat{q}_{i,r}$. Because $\kappa(\cdot)$ is defined by hard partitions, directly switching between $\hat{q}_{i,r}$ can introduce discontinuities (and discontinuous gradients) in the CBF constraints. Moreover, we must preserve the conformal coverage guarantees.
To address both issues, we construct a continuous margin $\tilde{q}_i(\hat{x}_i)$ that (i) varies smoothly across adjacent risk groups and (ii) \emph{over-approximates} the active group quantile, so that $\tilde{q}_i(\hat{x}_i)\ge \hat{q}_{i,\kappa(\hat{x}_i)}$ and coverage is retained.

Let $h(\hat{x}_i)$ denote the scalar risk indicator used in the taxonomy (i.e., $h(\hat{x}_i)=\min_\ell h_\ell(\hat{x}_i)$), and let $\tau_r$ be the threshold separating the higher-risk group $r$ from the adjacent lower-risk group $r{+}1$. Define a transition buffer
$\mathcal{I}_r \coloneqq [\tau_r,\tau_r+\epsilon]$ with user-chosen width $\epsilon>0$.
We then linearly interpolate between the adjacent quantiles over the transition buffer $\mathcal{I}_r$ (see Fig.~\ref{fig:quantile-comp} for the three-risk-group case):
\begin{align}
\label{eq:smooth_margin}
\tilde{q}_i(\hat{x}_i) =
\begin{cases}
\hat{q}_{i,r} + \dfrac{\hat{q}_{i,r+1}-\hat{q}_{i,r}}{\epsilon}\big(h(\hat{x}_i)-\tau_r\big),
& \text{if } h(\hat{x}_i)\in \mathcal{I}_r,\\[1.25ex]
\hat{q}_{i,\kappa(\hat{x}_i)},
& \text{otherwise}.
\end{cases}
\end{align}
Since $\hat{q}_{i,r}\ge \hat{q}_{i,r+1}$ for higher-risk $r$ (stricter coverage implies larger quantiles), \eqref{eq:smooth_margin} satisfies
$\tilde{q}_i(\hat{x}_i)\ge \hat{q}_{i,r+1}$ throughout $\mathcal{I}_r$, i.e., it is conservative for the lower-risk side of the boundary and thus preserves the conformal coverage in that region while remaining continuous.

\noindent\textbf{Barrier tightening via Lipschitz continuity.} Assume each barrier $h_\ell$ is Lipschitz with constant $L_{h_\ell}>0$. For any $x_i$ satisfying
$\|x_i-\hat{x}_i\|\le \tilde{q}_i(\hat{x}_i)$, the Lipschitz property gives
\begin{align*}
h_\ell(x_i) \;\ge\; h_\ell(\hat{x}_i) - L_{h_\ell}\|x_i-\hat{x}_i\|
\;\ge\; h_\ell(\hat{x}_i) - L_{h_\ell}\tilde{q}_i(\hat{x}_i).
\end{align*}
This motivates the \emph{conformalized} (tightened) barrier
$\tilde{h}_\ell\big(\hat{x}_i\big) \coloneqq h_\ell(\hat{x}_i) -L_{h_\ell}\tilde{q}_i(\hat{x}_i)$
so that
$h_\ell(x_i)\ge \tilde{h}_\ell(\hat{x}_i)$. 
Therefore, enforcing $\tilde{h}_\ell(\hat{x}_i)\ge 0$ implies $h_\ell(x_i)\ge 0$ whenever
$\|x_i-\hat{x}_i\|\le \tilde{q}_i(\hat{x}_i)$, which holds with probability at least $1-\delta_r$
for the active risk group by the risk-aware conformal calibration.

\noindent\textbf{Conformal Formation-Aware CBF-QP.} Using the tightened barrier $\tilde{h}_\ell(\hat{x}_i)=h_\ell(\hat{x}_i)-L_{h_\ell}\tilde{q}_i(\hat{x}_i)$, we enforce the conformal CBF condition by solving, at each time step, the following minimally invasive quadratic program:
\noindent\rule{\linewidth}{0.5pt}
\noindent \textbf{Conformal Formation-Aware CBF-QP}
\begin{flalign}
\label{eq:ccbf-qp}
    & \underset{u_i \in \mathcal{U}}{\arg\min} \quad \frac{1}{2}\left\|u_i - k\big(\hat{x}_i,u_{i-1},z_{i,d}\big)\right\|^2 &&\\
    & \text{s.t.} \quad
    \mathcal{L}_f \tilde{h}_\ell(\hat{x}_i)
    + \mathcal{L}_g \tilde{h}_\ell(\hat{x}_i)\,u_i
    + \gamma_\ell\,\tilde{h}_\ell(\hat{x}_i)\;\ge\; 0, \, \forall \ell &&
    \label{ineq:cbf}
\end{flalign}
\noindent\rule{\linewidth}{0.5pt}
where $\hat{x}_i=\pi_\theta(y_i)$ is the perception estimate and $\gamma_\ell>0$ is a user-defined CBF gain. By construction, $\tilde{h}_\ell$ incorporates a risk-adaptive conformal margin $\tilde{q}_i(\hat{x}_i)$ that is continuous across risk regions and preserves the conformal coverage guarantee, yielding a probabilistic safety certificate for the true state.
\begin{theorem}[Risk-Aware Probabilistic Forward Invariance]
\label{thm:prob_safety}
Consider the leader--follower dynamics~\eqref{eq:generic_dynamics} and the safe set
$\mathcal{C}_i \triangleq \{x_i\in\mathbb{R}^n \mid h_\ell(x_i)\ge 0,\ \forall \ell\}$.
Assume Assumption~\ref{ass:exchangeable} holds, and each $h_\ell$ is locally Lipschitz.
Let $\mathcal{V}_i \subseteq \{1,\dots,R\}$ denote the set of risk groups visited by the estimated trajectory
$\{\hat x_i(t)\}_{t\in[0,T]}$, i.e., $\mathcal{V}_i=\{r:\exists\,t\in[0,T],\ \kappa(\hat x_i(t))=r\}$ with coverage rates $\{\delta_{r}\}_{r \in \mathcal{V}_{i}}$.
If the control input $u_i(t)$ is computed for all $t\in[0,T]$ by the conformal formation-aware CBF-QP in~\eqref{eq:ccbf-qp}-\eqref{ineq:cbf}, and the initial tightened constraints satisfy $\tilde{h}_\ell(\hat{x}_i(0))\ge 0$ for all $\ell$, then
\begin{align}
\label{eq:prob_safety_conditional}
\mathbb{P}\!\left( x_i(t)\in\mathcal{C}_i,\ \forall t\in[0,T]\ \middle|\ \mathcal{V}_i \right)
\;\ge\; 1-\sum_{r\in\mathcal{V}_i}\delta_r.
\end{align}
Since $\sum_{r\in\mathcal{V}_i}\delta_r\le \sum_{r=1}^R\delta_r$, we also have the unconditional bound
$\mathbb{P}\big(x_i(t)\in\mathcal{C}_i,\ \forall t\in[0,T]\big)\ge 1-\sum_{r=1}^R\delta_r$.
\end{theorem}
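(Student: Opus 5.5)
The argument has two parts: a deterministic forward-invariance statement for the tightened barriers, followed by a union bound over the per-group conformal coverage events. Fix a follower $i$ and a realization of the test trajectory pair $(x_i(\cdot),\hat x_i(\cdot))$ with visited set $\mathcal{V}_i$.

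\emph{Step 1 (deterministic invariance of the tightened set).} Every $\tilde h_\ell=h_\ell-L_{h_\ell}\tilde q_i$ is locally Lipschitz, because $\tilde q_i$ in \eqref{eq:smooth_margin} is continuous and piecewise affine in $h(\hat x_i)$. The QP \eqref{eq:ccbf-qp}--\eqref{ineq:cbf} enforces $\dot{\tilde h}_\ell\ge-\gamma_\ell\tilde h_\ell$ along the estimated trajectory, in the almost-everywhere sense used in Definition~\ref{def:cbf}. Combined with $\tilde h_\ell(\hat x_i(0))\ge 0$, the comparison lemma then gives $\tilde h_\ell(\hat x_i(t))\ge \tilde h_\ell(\hat x_i(0))e^{-\gamma_\ell t}\ge 0$ for all $t\in[0,T]$ and all $\ell$.

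\emph{Step 2 (from estimate to truth).} On the event $\mathcal{E}\triangleq\{\|x_i(t)-\hat x_i(t)\|\le\tilde q_i(\hat x_i(t)),\ \forall t\in[0,T]\}$, the Lipschitz inequality from the barrier-tightening paragraph gives $h_\ell(x_i(t))\ge\tilde h_\ell(\hat x_i(t))\ge 0$, so $x_i(t)\in\mathcal{C}_i$ for all $t$. It therefore suffices to bound $\mathbb{P}(\mathcal{E}^c\mid\mathcal{V}_i)$.

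\emph{Step 3 (per-group coverage and union bound).} For each $r\in\mathcal{V}_i$, let $\mathcal{T}_r=\{t:\kappa(\hat x_i(t))=r\}$, which is nonempty, and let $E_r=\max_{t\in\mathcal{T}_r}\|x_i(t)-\hat x_i(t)\|$ be the test analogue of \eqref{eq:non-conformity-score}.
\begin{itemize}
\item The test trajectory visits $B_r$ if and only if it would belong to $\mathcal{D}_{i,r}$. By Assumption~\ref{ass:exchangeable}, conditioning on this visit preserves exchangeability of $E_r$ with $\{E^j_{i,r}\}_{j\in\mathcal{D}_{i,r}}$; this is the Mondrian argument.
\item The split-conformal guarantee then gives $\mathbb{P}(E_r\le\hat q_{i,r}\mid r\in\mathcal{V}_i)\ge 1-\delta_r$.
\item On $\{E_r\le\hat q_{i,r}\}$, every $t\in\mathcal{T}_r$ satisfies $\|x_i(t)-\hat x_i(t)\|\le\hat q_{i,r}\le\tilde q_i(\hat x_i(t))$.
\end{itemize}
The last inequality uses the over-approximation property of \eqref{eq:smooth_margin}. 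Outside the buffers we have $\tilde q_i=\hat q_{i,\kappa}$. Inside a buffer $\mathcal{I}_r$, the active group is $r+1$, and $\tilde q_i$ interpolates between $\hat q_{i,r}$ and $\hat q_{i,r+1}$. Since the quantiles are ordered, $\hat q_{i,r}\ge\hat q_{i,r+1}$, and so $\tilde q_i\ge\hat q_{i,r+1}$ there.

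Because the $\mathcal{T}_r$ partition $[0,T]$, we have $\mathcal{E}\supseteq\bigcap_{r\in\mathcal{V}_i}\{E_r\le\hat q_{i,r}\}$. The union bound then yields \eqref{eq:prob_safety_conditional}. The unconditional bound follows by averaging over $\mathcal{V}_i$ and using $\sum_{r\in\mathcal{V}_i}\delta_r\le\sum_{r=1}^R\delta_r$.

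\emph{Expected obstacle.} The delicate point is Step 3's conditioning.
\begin{itemize}
\item The coverage for group $r$ is valid conditional on the event $\{r\in\mathcal{V}_i\}$. Conditioning on the whole set $\mathcal{V}_i$ also involves the other groups, and I would justify this by noting that membership is a symmetric function of each trajectory, so exchangeability within $\{j:\mathcal{V}^j=\mathcal{V}_i\}$ is preserved. If that is too strong, I would state the bound for each $r$ conditional on visiting $r$ and apply the union bound at that level.
\item The ordering $\hat q_{i,r}\ge\hat q_{i,r+1}$ is asserted in the text rather than guaranteed. If it fails, I would replace $\tilde q_i$ in the buffer by $\max(\hat q_{i,r},\hat q_{i,r+1})$-dominating interpolation, or add monotonicity as a standing assumption.
\end{itemize}
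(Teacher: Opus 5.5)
Your decomposition is the paper's: the tightened barrier stays nonnegative along the estimate, the Lipschitz bound transfers this to the true state on the coverage event, Mondrian calibration controls each group's trajectory-level max score, and Boole's inequality combines the groups. Two of your steps, however, do not go through as written.

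\emph{Step 1.} The comparison lemma needs the QP constraint to bound $\frac{d}{dt}\tilde h_\ell(\hat x_i(t))$, and it does not.
\begin{itemize}
\item The constraint term is $\mathcal{L}_f\tilde h_\ell(\hat x_i)+\mathcal{L}_g\tilde h_\ell(\hat x_i)u_i=\nabla\tilde h_\ell(\hat x_i)^\top\big(f(\hat x_i,u_{i-1})+g(\hat x_i)u_i\big)$. This is the rate of change along the model flow through the point $\hat x_i$.
\item It equals $\frac{d}{dt}\tilde h_\ell(\hat x_i(t))$ only if $\dot{\hat x}_i=f(\hat x_i,u_{i-1})+g(\hat x_i)u_i$. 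The perception output $\hat x_i(t)=\pi_\theta(y_i(t))$ obeys no such equation. In the paper's pipeline $\hat\phi_i$ is a bin center, so $\hat x_i(t)$ jumps, and $\tilde h_\ell(\hat x_i(t))$ can become negative at a jump no matter what the QP does.
\item In addition, $\tilde h_\ell$ contains $\tilde q_i(\min_k h_k(\hat x_i))$, which is not differentiable. Definition~\ref{def:cbf} is stated for $C^1$ barriers, not in an almost-everywhere sense.
\end{itemize}
The paper does not derive this step; it simply asserts that the CBF-QP enforces $\tilde h_\ell(\hat x_i(t))\ge 0$ on $[0,T]$. You should make that an explicit hypothesis, or add an assumption tying the estimate to the model flow, rather than present it as a consequence of the comparison lemma.

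\emph{Conditioning in Step 3.} Your worry is justified. The paper's proof passes over it: it bounds $\mathbb{P}(\bigcup_{r\in\mathcal{V}_i}\mathcal{E}_r\mid\mathcal{V}_i)$ by $\sum_{r\in\mathcal{V}_i}\mathbb{P}(\mathcal{E}_r)$ for its miscoverage events $\mathcal{E}_r$, silently dropping the conditioning.

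Your first repair does not fix this. The test trajectory is indeed exchangeable with $\{j:\mathcal{V}^j=\mathcal{V}_i\}$, but $\hat q_{i,r}$ is the quantile of the pooled scores over $\mathcal{D}_{i,r}=\{j:r\in\mathcal{V}^j\}$. So Mondrian calibration gives only $\mathbb{P}(E_r>\hat q_{i,r}\mid r\in\mathcal{V}_i)\le\delta_r$. That is an average over patterns $V\ni r$ of $\mathbb{P}(E_r>\hat q_{i,r}\mid\mathcal{V}_i=V)$, and individual terms can exceed $\delta_r$. For example, trajectories that also enter $B_1$ may have systematically larger group-$2$ errors.

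Your fallback is the right move, but it yields the unconditional bound directly:
\[
\mathbb{P}\big(\exists t:\ x_i(t)\notin\mathcal{C}_i\big)\le\sum_{r=1}^R\mathbb{P}\big(E_r>\hat q_{i,r},\ r\in\mathcal{V}_i\big)\le\sum_{r=1}^R\delta_r\,\mathbb{P}(r\in\mathcal{V}_i)\le\sum_{r=1}^R\delta_r .
\]
It does not give \eqref{eq:prob_safety_conditional}. The conditional bound would require calibrating $\hat q_{i,r}$ within each visited-set pattern, which \eqref{eq:quantile} does not do.

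So, with Step~1 taken as a hypothesis, your argument establishes the unconditional bound. The route ``conditional bound, then average over $\mathcal{V}_i$'' is not available. Your caveat about the ordering $\hat q_{i,r}\ge\hat q_{i,r+1}$ is fair, since the paper only asserts it.
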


\begin{proof}
For each risk group $r$, define the (trajectory-level) miscoverage event
$
\mathcal{E}_r \triangleq
\left\{\max_{\substack{t\in[0,T]\\ \kappa(\hat x_i(t))=r}}
\|x_i(t)-\hat x_i(t)\| > \hat q_{i,r}\right\}.
$
By the nonconformity score~\eqref{eq:non-conformity-score}, the risk-aware quantiles~\eqref{eq:quantile},
and Assumption~\ref{ass:exchangeable}, we have $\mathbb{P}(\mathcal{E}_r)\le \delta_r$.

On the event $\bigcap_{r\in\mathcal{V}_i}\mathcal{E}_r^c$, the estimation error satisfies
$\|x_i(t)-\hat x_i(t)\|\le \hat q_{i,\kappa(\hat x_i(t))}\le \tilde q_i(\hat x_i(t))$ for all $t\in[0,T]$.
Using local Lipschitzness of $h_\ell$ with constant $L_{h_\ell}$,
$
h_\ell(x_i(t)) \ge h_\ell(\hat x_i(t)) - L_{h_\ell}\|x_i(t)-\hat x_i(t)\|
\ge h_\ell(\hat x_i(t)) - L_{h_\ell}\tilde q_i(\hat x_i(t))
= \tilde h_\ell(\hat x_i(t)).
$
Since the CBF-QP enforces $\tilde h_\ell(\hat x_i(t))\ge 0$ for all $t$ and $\tilde h_\ell(\hat x_i(0))\ge 0$,
it follows that $h_\ell(x_i(t))\ge 0$ for all $\ell$ and all $t\in[0,T]$, i.e., $x_i(t)\in\mathcal{C}_i$.
Therefore, a safety violation implies that at least one visited risk group experiences miscoverage:
$
\left\{\exists t\in[0,T]: x_i(t)\notin\mathcal{C}_i\right\}\subseteq \bigcup_{r\in\mathcal{V}_i}\mathcal{E}_r.
$
Conditioning on $\mathcal{V}_i$ and applying Boole's inequality gives
$
\mathbb{P}\!\left(\exists t:\ x_i(t)\notin\mathcal{C}_i\ \middle|\ \mathcal{V}_i\right)
\le \mathbb{P}\!\left(\bigcup_{r\in\mathcal{V}_i}\mathcal{E}_r\ \middle|\ \mathcal{V}_i\right)
\le \sum_{r\in\mathcal{V}_i}\mathbb{P}(\mathcal{E}_r)
\le \sum_{r\in\mathcal{V}_i}\delta_r,
$
and taking complements yields~\eqref{eq:prob_safety_conditional}.
\end{proof}
\begin{remark}[Miscoverage vs.\ Safety]
\label{remark:miscoverageVsafety}
The bound in Theorem~\ref{thm:prob_safety} is conservative for two reasons. First, Boole's inequality yields a worst-case union bound. Second, the analysis effectively upper-bounds safety failure by perception miscoverage, i.e., it treats any event $\|x_i(t)-\hat{x}_i(t)\|>\tilde q_i(\hat{x}_i(t))$ as potentially safety-critical. In practice, miscoverage occurring well inside $\mathcal{C}_i$ typically does not cause a constraint violation. Safety is threatened primarily when estimation errors occur near the boundary of $\mathcal{C}_i$. Consequently, the empirical probability of maintaining safety is often substantially higher than $1-\sum_{r\in\mathcal{V}_i}\delta_r$, This is reflected in Fig.~\ref{fig:sim1}, where the observed success rate is about $95\%$, whereas the union-bound guarantee evaluates to $1-(0.01+0.1+0.45) = 44\%$, which is markedly more conservative. 
\end{remark}

For reference, Algorithm~\ref{alg:cbf_filter} summarizes the proposed framework. At each control step, the follower (i) computes the perception estimate $\hat{x}_i$ and nominal tracking input, (ii) evaluates the risk indicator $h(\hat{x}_i)$ to select the active Mondrian group, (iii) interpolates the corresponding conformal margin to ensure smoothness across group boundaries, and (iv) solves conformal formation-aware CBF-QP using this adaptive margin as a minimally invasive safety filter.

\begin{algorithm}[t]
\caption{Risk-Aware Adaptive Conformal Control for Safe Leader–Follower Formation}
\label{alg:cbf_filter}
\begin{algorithmic}[1]
\State \textbf{Input:} measurement $y_i$, nominal setpoint $x_{i,d}$, DNN $\pi_{\theta}(\cdot)$, transition buffer $\epsilon$
\State \textbf{Offline Calibration:} Mondrian boundaries $\{\tau_r\}_{r=1}^{R-1}$, Risk-Aware quantiles $\{\hat{q}_{i,r}\}_{r=1}^R$
\Loop \text{ at each control step $t$}
    \State \textbf{1. State Estimation \& Nominal Control}
    \State $\hat{x}_i \gets \pi_{\theta}(y_i)$ \eqref{eq:state_estimator}
    \State $u_{i} \gets k(\hat{x}_i,u_{i-1},x_{i,d})$ \eqref{eq:controller}

    \State \textbf{2. Risk Evaluation}
    \State Compute risk indicator: $h(\hat{x}_i) \gets \min_\ell h_\ell(\hat{x}_i)$
    \State Determine region $r \gets \kappa(\hat{x}_i)$ such that $\hat{x}_i \in B_r$
    
    \State \textbf{3. Margin Interpolation \eqref{eq:smooth_margin}}
    \If{$r < R$ \textbf{and} $h_\ell(\hat{x}_i) \in [\tau_{r-1}, \tau_{r-1} + \epsilon]$} 
        \State $\tilde{q}_i \gets \hat{q}_{i,r} + \frac{\hat{q}_{i,r+1} - \hat{q}_{i,r}}{\epsilon} (h(\hat{x}_i) - \tau_{r})$
    \Else
        \State $\tilde{q}_i \gets \hat{q}_{i,r}$
    \EndIf
    
    \State \textbf{4. Safety Filter}
    \State $\tilde{h}(\hat{x}_i, \tilde{q}_i) \gets h(\hat{x}_i) - L_h \tilde{q}_i$ \Comment{Apply adaptive margin}
    \State $\hat{u}_i \gets \arg\min_{u_i \in \mathcal{U}} \frac{1}{2} ||\hat{u}_i - u_{i}||^2$ \text{ s.t. } \eqref{ineq:cbf}
    \State Apply safe control input $\hat{u}_i$ to the follower robot
\EndLoop
\end{algorithmic}
\end{algorithm}
\section{Simulation Results}
\label{sec:simulation_results}

This section presents Gazebo simulation results validating the proposed formation-aware, perception-safe leader–follower control framework. The setup consists of two heterogeneous mobile robots—a Husarion ROSbot 2 Pro (leader) and a Clearpath Jackal (follower)—navigating an oval race-track environment (Fig. \ref{fig:environment}). We compare against: (i) the nominal controller in \cite{11127883}, which ignores perception uncertainty; and (ii) two static global conformal prediction (CP) baselines that use a single bound everywhere, calibrated using data from only the low-risk region (an optimistic bound) or only the high-risk region (a pessimistic bound). In contrast, our risk-aware adaptive conformalized method adjusts the uncertainty bound to the formation-dependent risk level, remaining tight in low-risk configurations to reduce conservatism and expanding near FOV/range limits to preserve safety. Consequently, it achieves markedly higher safety success rates in high-risk formations while maintaining strong formation-tracking performance in low-risk regimes. Simulations were executed in a Docker container on a desktop workstation with an Intel Core i9-14900KF CPU, an NVIDIA RTX 4090 GPU, and 64 GB RAM.

\noindent\textbf{Simulation Setup.}
We simulate the vision-based leader--follower system in ~\eqref{eq:dynamic_system} with camera offset $d=0.254$\,m in the oval race-track environment shown in Fig.~\ref{fig:environment}. The leader trajectory is designed to induce three formation regimes with increasing perception risk: (i) low-risk tracking directly behind the leader (near the FOV center), (ii) medium-risk tracking during sustained turning that places the leader between the FOV center and boundary, and (iii) high-risk tracking as the leader approaches the FOV boundary. The follower uses the nominal tracking controller in \eqref{eq:controller} with a CBF safety filter, with gains $K_{L_i}=3.0$, $K_{\alpha_i}=1.85$, and $\gamma_\ell=3.0, \forall\ell$, and enforces the perception-safe set~\eqref{eq:fov-safe-set} defined by $D_{\min}=0.3$\,m, $D_{\max}=3.0$\,m, and horizontal FOV $2\psi_{\max}=1.52$\,rad. Bearing perception is provided by a ConvNeXt model~\cite{liu2022convnet} trained on 200k RGB images ($3\times224\times224$), with 21 uniformly spaced angle classes (resolution $4.4^\circ$).

\noindent\textbf{Risk-Aware Conformal Calibration and Baselines.}
Nonconformity scores are calibrated offline using 450 independent runs. A Mondrian taxonomy $\kappa(\hat{x}_i)$ partitions the domain by the barrier value $h(\hat{x}_i)$ using thresholds $\tau_1=0.1$ and $\tau_2=0.45$, yielding high-risk ($h<\tau_1$), medium-risk ($\tau_1\le h<\tau_2$), and low-risk ($h\ge\tau_2$) regions, with risk levels $\delta_1=0.01$, $\delta_2=0.10$, and $\delta_3=0.45$. Online, we smooth the discrete regional quantiles via a transition buffer of width $\epsilon=0.12$, producing a continuous margin $\tilde{q}_i(h(\hat{x}_i))$ that expands near FOV/range limits and tightens in low-risk formations (Figs.~\ref{fig:Mondrian}--\ref{fig:quantile-comp}). Two static global CP baselines, an optimistic low-risk calibration ($\delta=0.45$) and a conservative high-risk calibration ($\delta=0.01$), are used for comparison.


\begin{figure}[t]
\centering
\includegraphics[width=\linewidth]{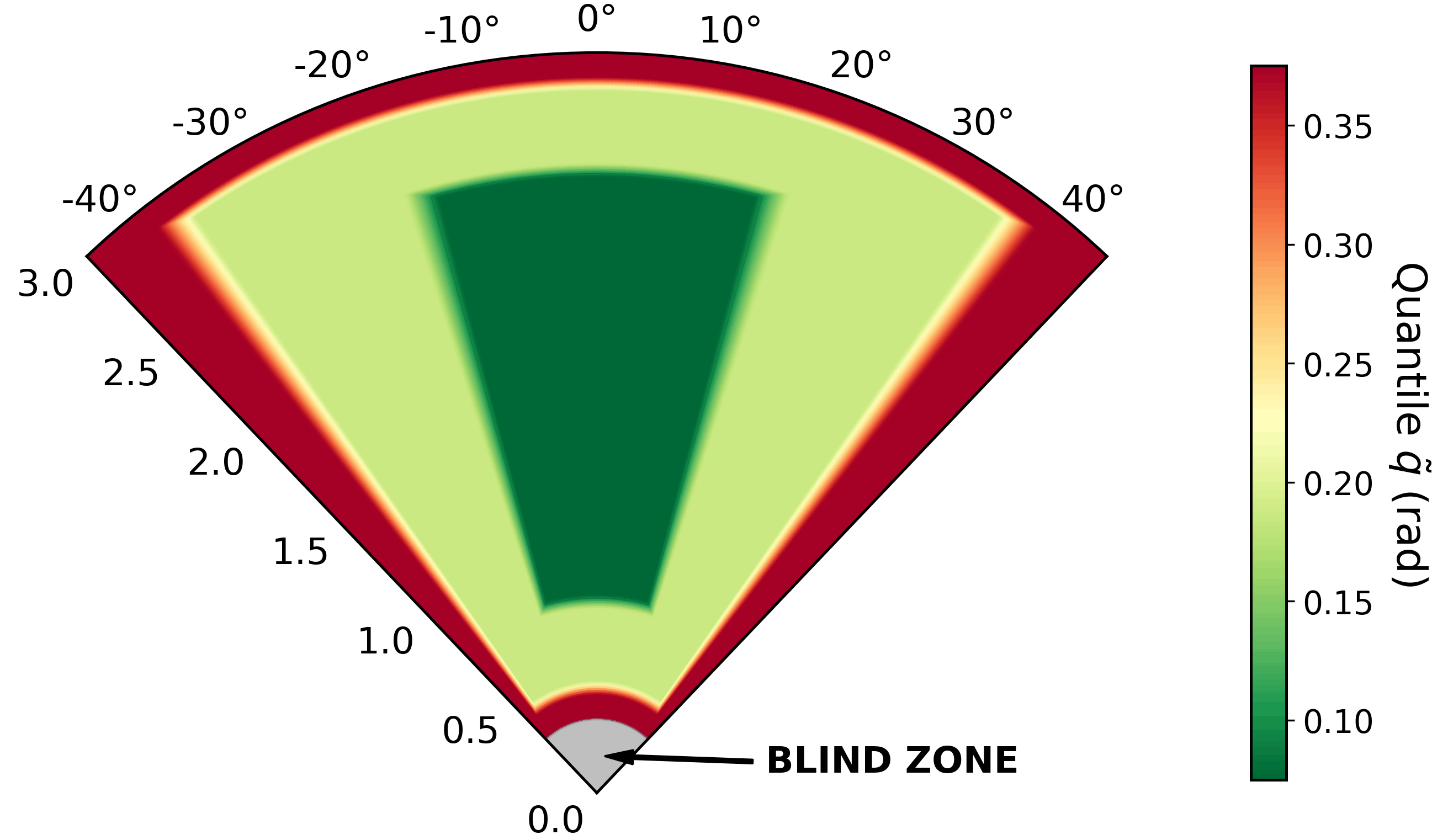}
\caption{Risk-aware Mondrian conformal quantiles projected over the camera FOV: uncertainty increases near the safety boundaries~(red) and decreases toward the interior~(green).}
\label{fig:Mondrian}
\vspace{-2.0ex}
\end{figure}

\begin{figure}[t]
\centering
\includegraphics[width=\linewidth]{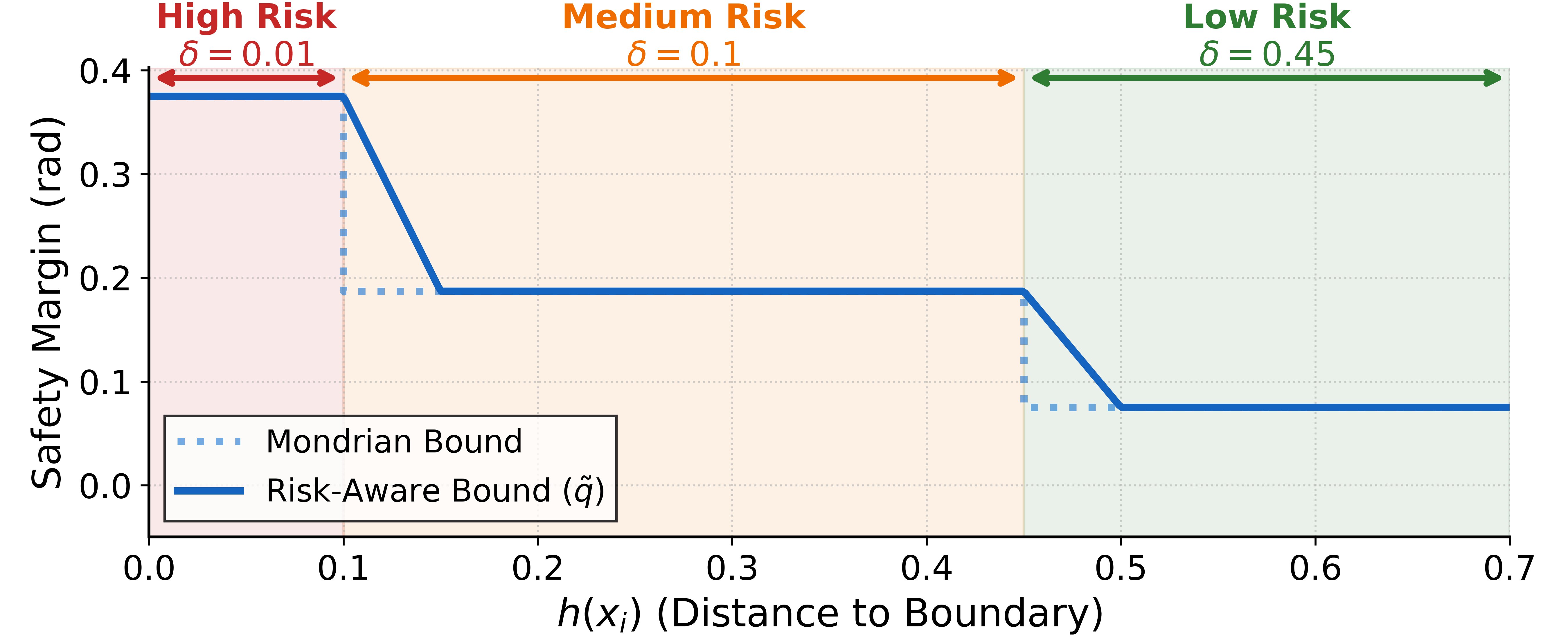}
\caption{Linear interpolation of the risk-aware conformal quantiles across Mondrian risk partitions, yielding a continuous margin $\tilde{q}_i(h)$ as in \eqref{eq:smooth_margin}.}
\label{fig:quantile-comp}
\vspace{-4ex}
\end{figure}



\begin{figure}[t]
    \centering
    \includegraphics[width=\linewidth]{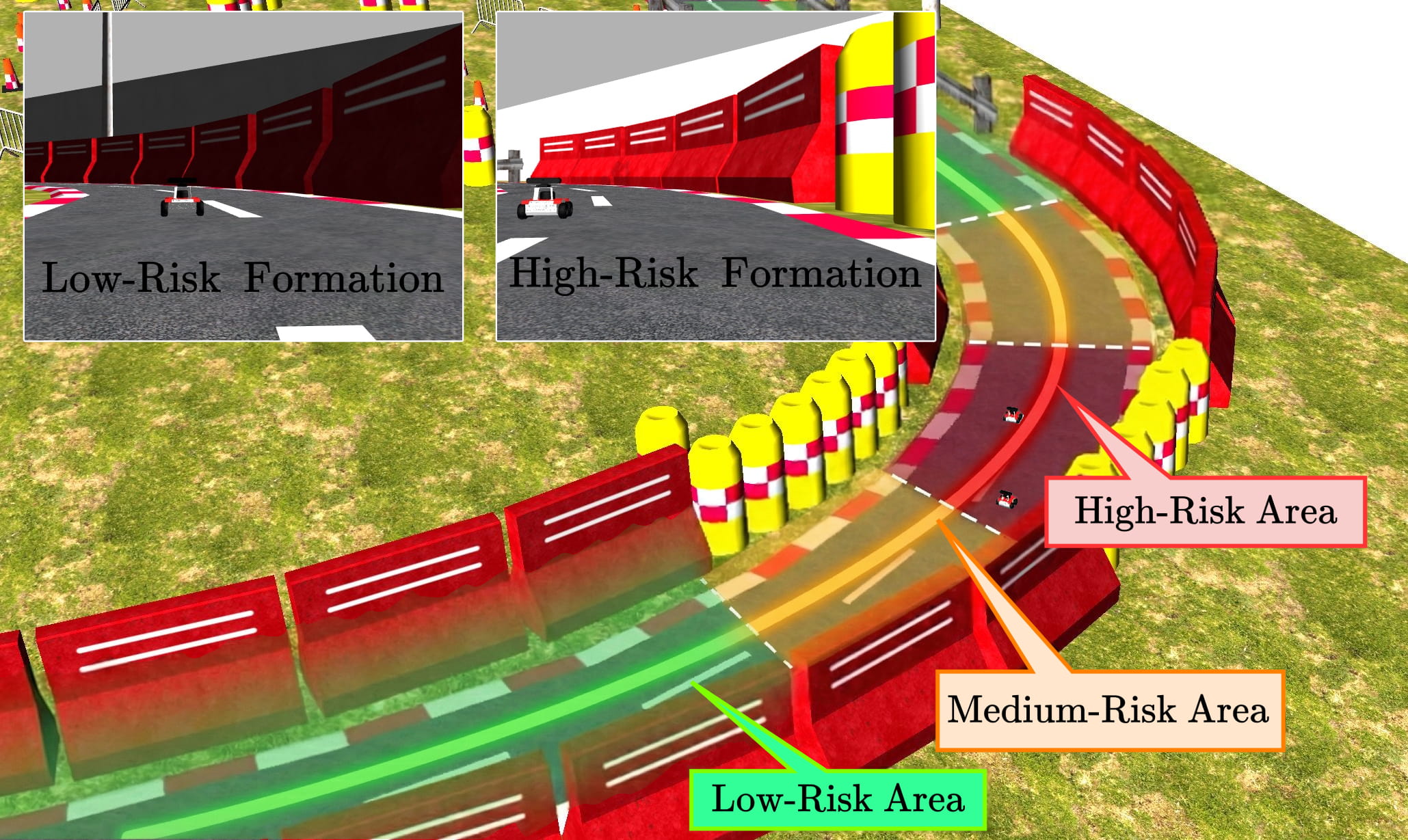}
    \caption{Gazebo race-track environment and benchmark maneuver illustrating the three formation-dependent perception risk regimes: low-risk tracking near the FOV center~(green), medium-risk during sustained turning between center and boundary~(orange), and high-risk operation near the FOV boundary~(red).}
    \label{fig:environment}
    \vspace{-4.25ex}
\end{figure}

\begin{figure*}[t]
	\centering
	\includegraphics[width=\linewidth]{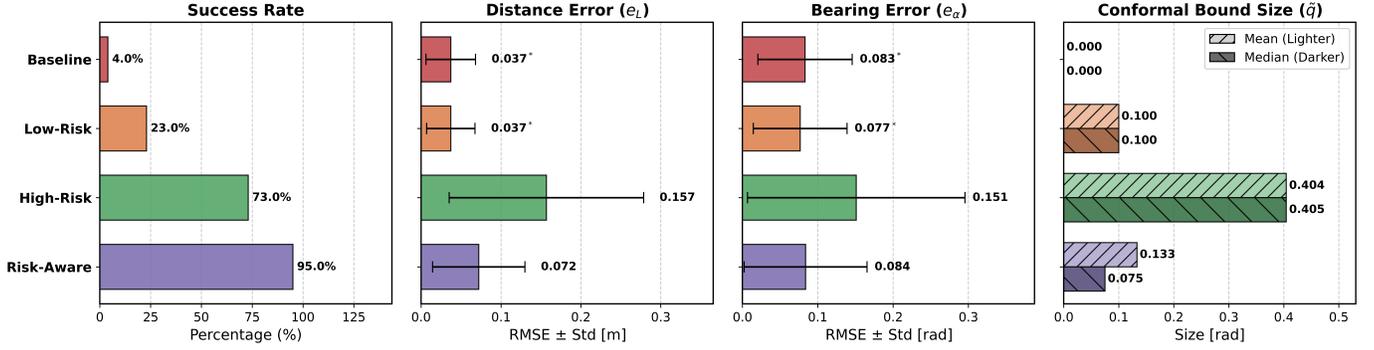}
    \caption{Performance comparison over 100 trials. The proposed Risk-Aware Method achieves the highest success rate and it also yields the best distance and bearing tracking statistics among the safe methods, while using substantially smaller conformal margins ($\tilde{q}$) than the high-risk global bound, reducing conservatism via formation-dependent adaptation. $^*$ Nominal and low-risk tracking errors are computed only on successful runs~(survivorship bias).}
	\label{fig:sim1}
    \vspace{-2ex}
\end{figure*}

\begin{figure*}[t]
	\centering
	\includegraphics[width=\linewidth]{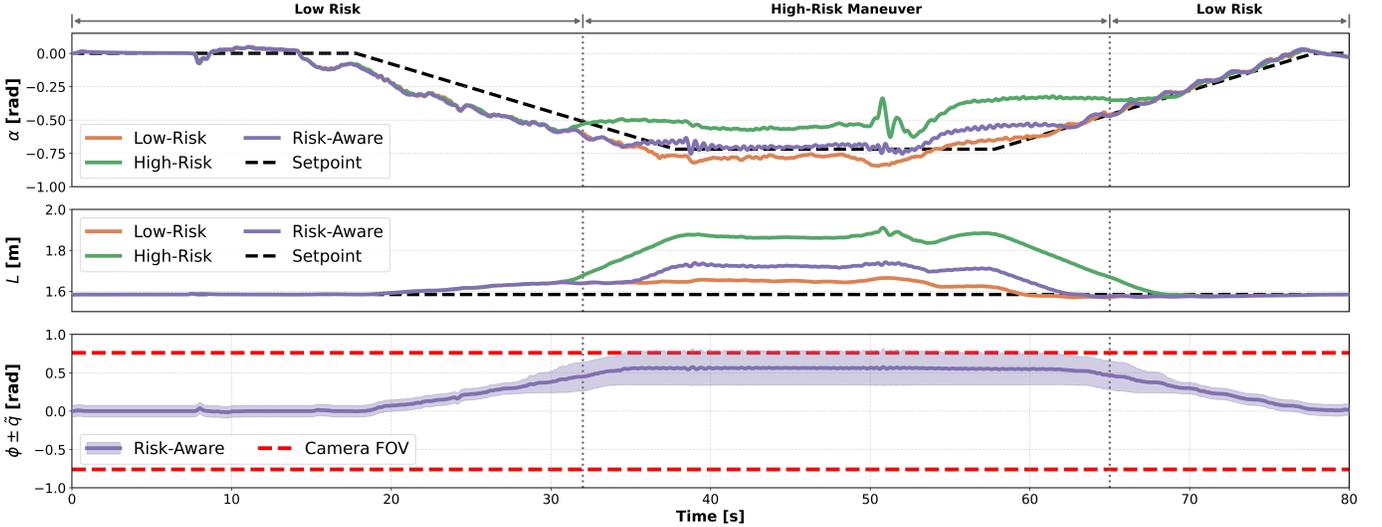}
    \caption{Average transient performance over $100$ trials. The Risk-Aware Method achieves the best transient tracking, particularly in bearing during the high-risk segment near the FOV boundary. The bottom plot shows the formation-dependent conformal margin $\tilde{q}$ shrinking in low-risk regions and expanding in high-risk regions, preserving safety while avoiding unnecessary conservatism. Averages for the nominal and global CP baselines are computed over their successful runs only.}
	\label{fig:sim1_ts}
    \vspace{-4ex}
\end{figure*}

\noindent\textbf{Validation Results.} Each method is evaluated over 100 independent trials, and  summary statistics are reported in Fig.~\ref{fig:sim1} and time-series averages in Fig.~\ref{fig:sim1_ts}.

\paragraph{Success rate}
Fig.~\ref{fig:sim1} shows that the proposed \emph{Risk-Aware Method} achieves a success rate $\mathbf{95\%}$ in the $100$ trials, significantly higher than the nominal baseline ($\mathbf{4\%}$), the low-risk global CP baseline ($\mathbf{23\%}$) and the high-risk global CP baseline ($\mathbf{73\%}$). The nominal controller fails primarily due to unmodeled perception error. The low-risk global bound underestimates uncertainty near the FOV/range limits, causing safety violations, and the high-risk global bound is overly conservative in benign configurations, frequently leading to CBF-QP infeasibility. By adapting the margin to formation risk, the risk-aware method preserves feasibility while enforcing safety.


\paragraph{Tracking Performance and Bound Size}
Fig.~\ref{fig:sim1} shows that the risk-aware method also attains the best overall tracking statistics in distance $e_L$ and bearing $e_\alpha$ (mean and variance), noting that the reported error statistics for other methods are conditioned on their \emph{successful} runs, which is consistent with their substantially lower success rates. Moreover, the risk-aware conformal uncertainty set is substantially smaller than the high-risk-only global bound on average, demonstrating reduced conservatism without sacrificing safety. Fig.~\ref{fig:sim1_ts} further confirms improved transient behavior: the risk-aware method yields the best bearing-angle tracking, especially in the high-risk segment near the FOV boundary. The bottom plot of Fig.~\ref{fig:sim1_ts} visualizes the underlying mechanism---the applied conformal margin shrinks in low-risk regions to recover control authority and expands in high-risk regions to preserve safety.

\section{Conclusion}

This paper develops a probabilistic safety framework for vision-based leader–follower formation control under heteroscedastic, formation-dependent perception errors. Since a single global conformal bound is either overly conservative in low-risk formations or insufficient near FOV/range limits, we propose a risk-aware Mondrian conformal predictor that calibrates region-specific, smoothly varying uncertainty margins based on visibility risk. These adaptive bounds are integrated into a conformal formation-aware CBF-QP safety filter, yielding a probabilistic forward-invariance guarantee while retaining control authority. Simulations demonstrate markedly higher safety success rates and improved tracking compared to nominal and static global CP baselines.
\balance
\bibliographystyle{IEEEtran}
\bibliography{IROS}

\end{document}